\renewcommand\subsubsection{\@startsection{subsubsection}{2}{\z@}{1.5ex plus 1.5ex minus 0.5ex}%
{0.7ex plus .5ex minus 0ex}{\normalfont\normalsize\itshape}}%
\DeclareMathAlphabet{\mathcal}{OMS}{cmsy}{m}{n}
\SetMathAlphabet{\mathcal}{bold}{OMS}{cmsy}{b}{n}
\DeclareMathOperator*{\supp}{supp}
\DeclareMathOperator*{\sign}{sgn}
\DeclareMathOperator*{\var}{var}
\DeclareMathOperator*{\cov}{cov}
\DeclareMathOperator*{\E}{E}
\DeclareMathOperator*{\prob}{Pr}
\DeclareMathOperator*{\proj}{\mathcal{P}_K}
\DeclareMathOperator*{\support}{\mathcal{S}_K}
\DeclareMathOperator*{\sparse}{\mathcal{H}_K}
\DeclareMathOperator{\bigo}{{\mathcal{O}}}
\DeclareMathOperator{\smallo}{{\scriptstyle\mathcal{O}}}
\DeclareMathOperator*{\polylog}{polylog}
\let\originalleft\left
\let\originalright\right
\renewcommand{\left}{\mathopen{}\mathclose\bgroup\originalleft}
\renewcommand{\right}{\aftergroup\egroup\originalright}
\newcommand{\bw}{{\bm w}}
\newcommand{\bx}{{\bm x}}
\newcommand{\be}{{\bm e}}
\newcommand{\bpsi}{{\bm \psi}}
\newcommand{\bhatx}{\bm{\hat{x}}}
\newcommand{\bbarx}{\bm{\bar{x}}}
\newcommand{\bbare}{\bm{\bar{e}}}
\newcommand{\subzero}{{\mathchoice{}{}{\scriptscriptstyle}{}0}}
\theoremstyle{plain}
\newtheorem{theorem}{Theorem}
\newtheorem{lemma}{Lemma}
\newtheorem{proposition}{Proposition}
\theoremstyle{definition}
\theoremstyle{remark}
\newtheorem*{remark}{Remark}
\newtheorem*{remarks}{Remarks}
\newtheorem*{assumption}{Assumption}
\tikzset{>={Latex[width=1.2mm,length=1.5mm]}}
\renewcommand\footnoterule{%
 \kern-5pt
 \hrule\@width.5\columnwidth
 \kern4.6pt}
\begin{document}
\title{Sparsity-based Defense against \\ Adversarial Attacks on Linear Classifiers} 

\author{
  \IEEEauthorblockN{Zhinus Marzi\IEEEauthorrefmark{1}\thanks{\IEEEauthorrefmark{1}Joint first authors.}, Soorya Gopalakrishnan\IEEEauthorrefmark{1}, Upamanyu Madhow, Ramtin Pedarsani}

  \IEEEauthorblockA{University of California, Santa Barbara\\
                    Email: \{zhinus\_marzi, soorya, madhow, ramtin\}@ucsb.edu}
}

\maketitle


\begin{abstract}
 Deep neural networks represent the state of the art in machine learning in a growing number of fields, including vision, speech and natural language processing. However, recent work raises important questions about the robustness of such architectures, by showing that it is possible to induce classification errors through tiny, almost imperceptible, perturbations.  Vulnerability to such ``adversarial attacks'', or ``adversarial examples'', has been conjectured to be due to the excessive linearity of deep networks. In this paper, we study this phenomenon in the setting of a linear classifier, and show that it is possible to exploit sparsity in natural data to combat $\ell_{\infty}$-bounded adversarial perturbations. Specifically, we demonstrate the efficacy of a sparsifying front end via an ensemble averaged analysis, and experimental results for the MNIST handwritten digit database. To the best of our knowledge, this is the first work to show that sparsity provides a theoretically rigorous framework for defense against adversarial attacks.
\end{abstract}

\section{Introduction}

Recent work in machine learning security points out the vulnerability of deep neural networks to adversarial perturbations \cite{szegedy2013intriguing,fawzi2017review,goodfellow2014adversarial,moosavi2016deepfool}. These perturbations can be designed to be barely noticeable to the human eye, but can cause large classification errors in state of the art deep networks.  While it is tempting to speculate that this vulnerability arises from the complex, nonlinear nature of deep networks, a more plausible explanation is that it is due to the excessive linearity of such networks \cite{goodfellow2014adversarial,moosavi2016deepfool,fawzi2017classification,ben2016expressivity}.  When we take a linear combination of the components of a high-dimensional input, small, adversarially chosen, perturbations of each component can add up to a large perturbation at the output.  Complex operations such as a rectified linear unit (ReLU) operating beyond its bias, or a sigmoid in its linear region, together with operations such as max pooling or average pooling, when cascaded through multiple stages, still amount to an approximately linear combination of the input. Of course, the coefficients of the linear combination exhibit some dependence on the input, but these can be viewed as on-off switches rather
than a change in the value of the coefficients: for example, whether the input is such that a ReLU unit is operating in its linear region, or 
the identity of the argument of the maximum in a max pooling unit.  This motivates us to take a step back in this paper, and study adversarial perturbations in the simplest
possible setting: a linear classifier.

Sparsity is an intuitively plausible mechanism: we understand that humans reject small perturbations by focusing on the key features that stand out. Our proposed approach is based on this intuition. In this paper we show via both theoretical results and experiments that a sparsity-based defense is effective against $\ell_{\infty}$-bounded perturbations. 

We consider a system consisting of a linear classifier and two participants: the adversary and the defender. The adversary perturbs the input data, with the goal of causing misclassification. The defender inserts a pre-processing function in order to attenuate the impact of the adversary. We propose a sparsifying front end as the preprocessing function and evaluate its performance in two scenarios: a ``semi-white box'' setting where the adversary designs the perturbation based on the linear model, but without accounting for the pre-processing, and a ``white box'' setting where the attack accounts for both the pre-processing function and the classifier.

{\bf Contributions}:
We develop a theoretical framework to assess and demonstrate the effectiveness of a sparsity-based defense against adversarial attacks. To the best of our knowledge, this is the first work to show that sparsity provides a rigorous foundation for defense against adversarial perturbations. Our main contributions in this paper are as follows:
\begin{itemize}
\item We quantify the achievable gain of the sparsity-based defense via an ensemble-averaged analysis based on a stochastic model for the linear classifier. As the main theoretical contribution of the paper, in Theorems \ref{theorem1} and \ref{theorem2} we show that with high probability, sparsity-based defense reduces the adversarial impact by a factor of $K/N$ in the semi-white box setting, and by $\bigo(K \polylog(N)/N)$ in the white box setting, where $K$ is the sparsity of the signal, and $N$ is the signal's dimension.
\item We demonstrate the robustness of our proposed defense through experimental results for binary classification using a linear SVM on the MNIST handwritten digit database. 
Small adversarial perturbations can render such a classifier useless (0\% accuracy), but our sparsity-based defense limits the damage to 1-4\% degradation in accuracy for the semi-white
and white box attacks, respectively.
\end{itemize}

\section{Related Work}

The existence of ``blind spots'' in deep neural networks \cite{szegedy2013intriguing} has been the subject of extensive recent study in machine learning literature \cite{fawzi2017review}. It was initially hypothesized that this phenomenon is due to the high complexity of neural networks, but work on linearization-based attacks \cite{goodfellow2014adversarial,moosavi2016deepfool} and decision boundaries of deep networks \cite{fawzi2017classification,ben2016expressivity} indicates that it is instead due to their excessive linearity. A variety of defenses have been proposed to combat adversarial attacks, including some that implicitly make use of sparsity-related techniques \cite{bhagoji2017dimensionality,das2017keeping}. The evaluations in such prior work have been purely empirical. Our analytical framework supplements these by providing a theoretical justification for systematic and explicit pursuit of sparsity-based defenses.
It is worth noting that sparsity has also been suggested purely as a means of improving classification performance \cite{makhzani2013ksparse}, which indicates that the performance
penalty for appropriately designed sparsity-based defenses could be minimal.

\section{Problem Formulation}

\subsection{Preliminaries}

We denote by $\bx \in \mathbb{R}^{N}$ a data sample with $K$-sparse representation in orthonormal basis $\Psi$ $\left(\,=\left[\bpsi_1,\bpsi_2,\dots,\bpsi_N\right]\,\right)$:
\begin{equation*}
\left\Vert \Psi^T\bx\right\Vert_0 \le K \qquad (K\ll N).
\end{equation*}
Given a linear model $\bw \in \mathbb{R}^{N}$, and denoting by $\bhatx$ a {\it modified} data sample, we define performance measure $\Delta$:
\begin{equation*}
\Delta \left( \bx, \bhatx\right) = |\bw^T \bhatx - \bw^T \bx|.
\end{equation*}

\subsection{System Model}

Now we describe our system (depicted in Fig. \ref{fig:main_block}) composed of two blocks, the adversary and the defense:

\vspace{-3mm}
\begin{figure}[htbp]
  	\centering
	\begin{tikzpicture}[
	rect/.style = {rectangle, draw=black!100, fill=cyan!18, thin, minimum height=7.5mm, minimum width=11mm},
	rect_smaller/.style = {rectangle, draw=black!100, fill=cyan!18, thin, minimum height=7.5mm, minimum width=8mm},
	circ/.style = {circle, draw=black!100, fill=cyan!18, thin, minimum size=7mm},
	outer/.style = {rounded corners=0.2cm, draw=black!100, dashed, inner sep = 2.2mm}
	]
	\node[] (x) {$\bx$};
	\node[circ] (plus) [right = 5mm of x] {$+$};
	\node[] (e) [above = 5mm of plus] {$\be$};
	\node[rect] (f) [right = 11mm of plus] {$f(\cdot)$};
	\node[rect] (w) [right = 11mm of f] {$\bw^T$};
	\node[] (out) [right = 5mm of w] {$\bw^T\hat{\bx}$};
	\node[outer, inner sep = 2.2mm, fit = (plus), label=below:\footnotesize{Adversary}] (attack) {};
	\node[outer,fit = (f) (w), label=below:\footnotesize{Defense}] (frontend) {};
	\draw[->] (x) -- (plus);
	\draw[->] (e) -- (plus); 
	\draw[->] (plus) -- node[anchor=south]{$\bbarx$} (f); 
	\draw[->] (f) -- node[anchor=south]{$\bhatx$}(w);
	\draw[->] (w) -- (out);
	\end{tikzpicture}
	\caption{Block diagram of the system.}
  	\label{fig:main_block}
\end{figure}
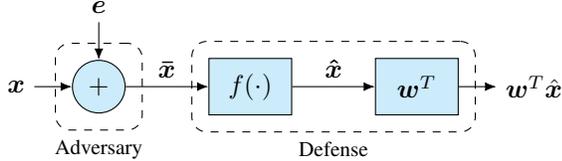
\vspace{-2mm}

\begin{itemize}
\item[$-$] The {\it adversary} induces an $\ell_{\infty}$-bounded additive perturbation $\be \in \mathbb{R}^N$ to data $\bx$, with the goal of maximizing $\Delta$:
\vspace{-2pt}
\begin{equation*}
\begin{split}
    \max_{\be}& \quad
    {\Delta \left( \bx, \bhatx \right)} \\
    \mathrm{s.t.}& \quad
    \left\Vert\be\right\Vert_\infty<\epsilon.
\end{split}
\end{equation*}
\item[$-$] The {\it defense} adds a pre-processing function $f:\mathbb{R}^N\to\mathbb{R}^N$ to the linear model $\bw$, with the goal of minimizing $\Delta$.
\end{itemize}

\section{Sparsity-based defense}

\subsection{Pre-processing Function}

Given a linear classifier, we propose a pre-processing function via a {\it sparsifying front end} to combat adversarial attacks. Figure \ref{fig:schematic} shows a block diagram of our model, composed of sparsity-based preprocessing and a linear machine learning model $\bw^T$. Function $\sparse(\cdot)$ enforces sparsity by retaining the $K$ coefficients largest in magnitude and zeroing out the rest. Since $\bx$ is $K$-sparse in orthonormal basis $\Psi$, we note that $\bhatx = \bx$ when there is no attack ($\be = \bm{0}$).

We define the following quantities:
\begin{align}
\support \left(\bx\right) \triangleq \supp \left(\sparse \left(\Psi^T\bx\right)\right), \nonumber \\
\proj \left(\be,\bx\right) \triangleq  \sum_{k \in \support(\bx)} \bpsi_k \bpsi_k^T\be, \nonumber
\end{align}
where  $\support \left(\bx\right)$ is the support of the $K$-sparse representation of $\bx$, and $\proj \left(\be,\bx\right)$ is the projection of $\be$ on the subspace spanned by $\support \left(\bx\right)$.

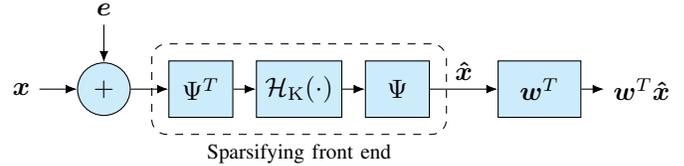
\begin{figure}[htbp]
	\vspace{-7pt}
  	\centering
	\begin{tikzpicture}[
	rect/.style = {rectangle, solid, draw=black!100, fill=cyan!18, thin, minimum height=7.5mm, minimum width=11mm},
	rect_smaller/.style = {rectangle, solid, draw=black!100, fill=cyan!18, thin, minimum height=7.5mm, minimum width=8.5mm},
	circ/.style = {circle, solid, draw=black!100, fill=cyan!18, thin, minimum size=7mm},
	outer/.style = {rounded corners=0.2cm, draw=black!100, dashed, inner sep = 2.2mm}
	]
	\node[] (x) {$\bx$};
	\node[circ] (plus) [right = 5mm of x] {$+$};
	\node[] (e) [above = 5mm of plus] {$\be$};
	\node[rect_smaller] (psit) [right = 5mm of plus] {$\Psi^T$};
	\node[rect_smaller] (hk) [right = 3mm of psit] {$\sparse(\cdot)$};
	\node[rect_smaller] (psi) [right = 3mm of hk] {$\Psi$};
	\node[rect] (w) [right = 9mm of psi] {$\bw^T$};
	\node[] (out) [right = 3mm of w] {$\bw^T\bhatx$};
	\node[outer,fit = (psit) (hk) (psi), label=below:\footnotesize{Sparsifying front end}] (frontend) {};
	\draw[->] (x) -- (plus);
	\draw[->] (e) -- (plus); 
	\draw[->] (plus) -- (psit); 
	\draw[->] (psit) -- (hk);
	\draw[->] (hk) -- (psi);
	\draw[->] (psi) -- node[anchor=south]{$\bhatx$} (w);
	\draw[->] (w) -- (out);
	\end{tikzpicture}
	\vspace{-15pt}
	\caption{Block diagram of sparsity-based defense}
	\label{fig:schematic}
\end{figure}

We also define the {\it high SNR regime} as the operating region where the additive perturbation does not shift the $K$-dimensional subspace of $\bx$:
\begin{align}  \label{eq:Complete_recovery}
\support(\bx)=\support(\bx+\be).
\end{align}
In Section \ref{sec:analytical}, Proposition \ref{prop:SNR_condition}, we characterize the conditions that guarantee (\ref{eq:Complete_recovery}).
Now assuming that we operate in the high SNR regime, we get
\begin{equation*}
\sparse \left(\Psi^T (\bx+\be)\right) = \sparse \left(\Psi^T \bx\right) + \bbare
=\Psi^T \bx + \bbare,
\end{equation*}
where
\begin{equation*}
     \bar{e}_k=
    \begin{cases}
      \bpsi_k^T \be , & \text{if}\ k \in \support(\bx) \\
      0, & \text{otherwise}.
    \end{cases}
  \end{equation*}
The output of the pre-processing function thus becomes
\begin{equation*}
\bhatx = \bx + \sum_{k \in \support(\bx)}\bpsi_k \bpsi_k^T \be 
\, = \, \bx +  \proj(\be,\bx).
\end{equation*}
Therefore, the performance measure or adversarial attack's impact will be
\begin{align} 
\Delta = \left| \bw^T\bhatx - \bw^T\bx \right| &= \left| \bw^T \proj(\be,\bx) \right| \nonumber \\
&= \left| \be^T \proj(\bw,\bx) \right|, \label{eq:Delta}
\end{align}
where (\ref{eq:Delta}) follows directly from the definition of $\proj(\be,\bx)$.

\subsection{Attacks and defenses}

We now compare the robustness of both the plain classifier and our proposed model against various attacks designed based on partial/full knowledge of the defense.

\begin{enumerate}[itemsep=5pt, topsep=5pt, label=\arabic*.,leftmargin=15pt]
\item \textbf{No front end:} Here the perturbed data is directly input to the ML classifier, i.e, $\Delta_0=\left| \bw^T\be \right|$. We use this scenario as a baseline to assess the efficacy of our defense. 

Assuming the adversary has knowledge of $\bw$, the most effective attack would be in the direction orthogonal to the classifier's decision boundary, subject to the $\ell_{\infty}$ constraint: 
\begin{equation*}
\be=\epsilon \sign(\bw).
\end{equation*}
This yields
\begin{equation*}
\Delta_0=\epsilon \, \left\Vert\bw\right\Vert_1.
\end{equation*}

\item\textbf{Semi-white box attack:} In this scenario the defender employs the sparsifying front end, but the adversary designs the perturbation based on knowledge of $\bw$ alone. Hence the perturbation remains
\begin{equation*}
\be_{\mathrm{SW}}=\epsilon \sign \left(\bw\right).
\end{equation*}
Using (\ref{eq:Delta}), we get the impact of the attack as follows:
\begin{equation*}
\Delta_{\mathrm{SW}} = \epsilon \left| \sign(\bw^T) \proj(\bw,\bx) \right|.
\end{equation*}

\item\textbf{White box attack:} Here the adversary has knowledge of both $\bw$ and the front end, and designs perturbations accordingly. This results in the following optimization problem:
\vspace{-5pt}
\begin{equation*}
\begin{split}
    \max_{\be}& \quad
    {\left| \be^T \proj(\bw,\bx) \right|} \\
    \mathrm{s.t.}& \quad
    \left\Vert\be\right\Vert_\infty<\epsilon.
\end{split}
\end{equation*}
The optimal perturbation is
\begin{equation*}
\be_{\mathrm{W}}=\epsilon \sign \left(\proj \left(\bw,\bx \right)\right),
\end{equation*}
and its impact becomes
\begin{equation*}
\Delta_{\mathrm{W}}=\epsilon \, \left\Vert\proj (\bw,\bx)\right\Vert_1.
\end{equation*}
Thus, instead of aligning with $\bw$, $\be_{\mathrm{W}}$ is aligned to the projection of $\bw$ on the subspace that $\bx$ lies in. 
\end{enumerate} 

\section{Analytical results}\label{sec:analytical}

\subsection{Characterizing the High SNR Regime}

\begin{proposition} \label{prop:SNR_condition}
For sparsity level K, the sparsifying front end preserves the input coefficients if the following SNR condition holds:
\begin{equation*}
\mathrm{SNR} \, \triangleq \, \frac{\lambda}{\epsilon} \,> 2M,
\end{equation*}
where $\lambda$ is the magnitude of the smallest non-zero entry of $\sparse(\Psi^T x)$ and $M = \max_l{\left\Vert \bpsi_l\right\Vert_1}$.
\end{proposition}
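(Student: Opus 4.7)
The plan is to work entirely in the transform domain. Let $\alpha = \Psi^T \bx$ and $\beta = \Psi^T \be$, so that $\Psi^T(\bx+\be) = \alpha + \beta$. Since $\Psi$ is orthonormal and $\sparse$ just keeps the $K$ largest-magnitude entries, the condition $\support(\bx) = \support(\bx+\be)$ is equivalent to showing that every coordinate $k \in \supp(\alpha)$ still has strictly larger magnitude in $\alpha+\beta$ than every coordinate $k' \notin \supp(\alpha)$.

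The first step is a straightforward Hölder bound to control the perturbation in the transform domain: for any index $l$,
\begin{equation*}
|\beta_l| = |\bpsi_l^T \be| \;\le\; \|\bpsi_l\|_1 \, \|\be\|_\infty \;<\; M\epsilon.
\end{equation*}
Using this, for $k \in \supp(\alpha)$ we get $|\alpha_k + \beta_k| \ge |\alpha_k| - |\beta_k| > \lambda - M\epsilon$, while for $k' \notin \supp(\alpha)$ we have $\alpha_{k'} = 0$ and so $|\alpha_{k'} + \beta_{k'}| = |\beta_{k'}| < M\epsilon$.

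The support is preserved provided the smaller of the two lower bounds strictly exceeds the upper bound on the ``outside'' entries, i.e.\ $\lambda - M\epsilon \ge M\epsilon$, which rearranges to $\lambda/\epsilon > 2M$. This is exactly the stated SNR condition, completing the argument. There is no serious obstacle here; the only subtlety is making sure the same uniform bound $M = \max_l \|\bpsi_l\|_1$ is applied to both the on-support and off-support coordinates, since a priori the two sets might concentrate on basis vectors with different $\ell_1$ norms — taking the maximum across all $l$ gives a single bound that covers both cases simultaneously.
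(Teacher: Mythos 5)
Your proof is correct and follows essentially the same route as the paper's: both reduce support preservation to comparing the perturbed magnitudes on and off $\support(\bx)$, apply the triangle inequality, and bound each transform-domain perturbation coefficient $|\bpsi_l^T\be|$ by $M\epsilon$ via H\"older's inequality and the $\ell_\infty$ constraint. The only cosmetic slip is writing ``$\lambda - M\epsilon \ge M\epsilon$ rearranges to $\lambda/\epsilon > 2M$'' (mixing $\ge$ and $>$), which is harmless since the proposition only asserts sufficiency of the strict condition.
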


\begin{proof}
It is easy to see that (\ref{eq:Complete_recovery}) is equivalent to
\begin{equation*}
\min_{i \in \support(\bx)} \left|\bpsi^T_i (\bx+\be) \right| > \max_{j \notin \support(\bx)} \left|\bpsi_j^T(\bx+\be)\right| 
 = \max_{j \notin \support(\bx)} \left|\bpsi_j^T\be\right|, 
\end{equation*} 
where the equality follows from the definition of $\support(\cdot)$.
Denoting the optimal indices by $i_{\subzero}$ and $j_{\subzero}$, we use triangle inequality to obtain
$| \bpsi^T_{i_{_\subzero}}\bx | > | \bpsi_{i_{_\subzero}}^T\be | + | \bpsi_{j_{_\subzero}}^T\be |$.
The proposition follows by applying H\"{o}lder's inequality and using the $\ell_{\infty}$-bound on $\be$.
\end{proof} 

\begin{remarks}\
\begin{enumerate}[label=\arabic*.,leftmargin=15pt]
\item The SNR condition is easier to satisfy for bases with sparser, or more localized, basis functions (smaller $M$). For example, we expect a wavelet basis to be better than a DCT basis.
\item \label{rem:snr} When $\bx$ is approximately $K$-sparse, choosing smaller $K$ allows the SNR condition to hold for larger perturbations, but at the expense of higher signal perturbation. These must be traded off to optimize classification performance.
\end{enumerate}
\end{remarks}

All of our subsequent analysis in this section is based on the assumption that the SNR condition in Proposition \ref{prop:SNR_condition} holds. In this case, the sparsifying front end is signal-preserving, hence the output distortion can be quantified solely by analyzing its effect on the adversarial perturbation. In our experiments with MNIST data, we find that the SNR condition is approximately satisfied for the range of $K$ that works most effectively (1-5\% of the coefficients in a wavelet basis).

\subsection{Ensemble Averaged Performance}

We now provide an analysis that quantifies the robustness provided by sparsification over an ensemble of linear classifiers, by imposing
a stochastic model for $\bw$.

\begin{assumption}
For $\bw = \left(w_1,\dots, w_N\right)^T$, we model the $\{ w_i ,\, i=1,\dots,N \}$ as i.i.d., with zero mean and median: $\E\left[w_1 \right]=0$ and $\E\left[\sign \left(w_1\right) \right] = 0$.
Let $\E\left[|w_1|\right] = \mu$ and $\E\left[w_1^2\right] = \sigma^2$.
\end{assumption}

\subsubsection{Semi-White Box Attack}

\begin{theorem} \label{theorem1}
As $K$ approaches infinity, $\Delta_{\mathrm{SW}}/K$ converges to $\mu$ in probability,
i.e.\
\begin{equation*}
\lim_{K\to\infty} \prob \left( \left| \frac{\Delta_{\mathrm{SW}}}{K} - \mu \right| \leq \delta \right) = 1 \quad \forall \; \delta >0.
\end{equation*}
\end{theorem}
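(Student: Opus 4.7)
The plan is to show that $\Delta_{\mathrm{SW}}/(\epsilon K)$ concentrates at $\mu$ by computing the first two moments of the underlying sum and applying Chebyshev's inequality. Throughout I treat $\bx$, and hence $S := \support(\bx)$, as fixed, and randomize only over $\bw$.

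First I would rewrite $\Delta_{\mathrm{SW}}$ in a compact bilinear form. Let $A := \sum_{k \in S} \bpsi_k \bpsi_k^T$ be the matrix of the orthogonal projection onto $\mathrm{span}\{\bpsi_k : k \in S\}$ expressed in the standard basis. Starting from (\ref{eq:Delta}) with $\be_{\mathrm{SW}} = \epsilon\,\sign(\bw)$,
\begin{equation*}
\tfrac{1}{\epsilon}\Delta_{\mathrm{SW}} \;=\; \bigl|\sign(\bw)^T A\, \bw\bigr| \;=\; \bigl|\,D_K + O_K\,\bigr|,
\end{equation*}
where $D_K := \sum_i A_{ii}\,|w_i|$ and $O_K := \sum_{i\neq j} A_{ij}\,\sign(w_i)\,w_j$. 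Two properties of $A$ will be used repeatedly: since $A$ is a rank-$K$ orthogonal projection, $\mathrm{tr}(A)=K$ and $\|A\|_F^2 = \mathrm{tr}(A^2)=K$; also $0\le A_{ii}\le 1$.

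The mean is straightforward: the i.i.d.\ structure together with $\E[\sign(w_1)]=\E[w_1]=0$ gives $\E[\sign(w_i)w_j] = \mu\,\mathbbm{1}[i=j]$, so $\E[O_K]=0$ and $\E[D_K + O_K]=\mu\,\mathrm{tr}(A)=\mu K$. The variance is the main work; I would bound it in three pieces. (i) $\var(D_K) \le \sigma^2 \sum_i A_{ii}^2 \le \sigma^2\,\mathrm{tr}(A)=\sigma^2 K$, using $A_{ii}^2\le A_{ii}$. (ii) For $\var(O_K)$ I would enumerate coincidences among $(i,j)$ and $(i',j')$ with $i\neq j$ and $i'\neq j'$: using independence and the identity $\sign(w)\,w=|w|$, only the pairings $(i,j)=(i',j')$ and $(i,j)=(j',i')$ give non-vanishing contributions (of $\sigma^2$ and $\mu^2$ respectively), yielding $\var(O_K)\le(\sigma^2+\mu^2)\sum_{i\neq j}A_{ij}^2\le(\sigma^2+\mu^2)K$. (iii) $\cov(D_K,O_K)=0$, because every surviving index coincidence leaves a lone $\E[w_\ell]=0$ factor after $|w|\sign(w)=w$ is absorbed. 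Summing, $\var(D_K+O_K)=O(K)$.

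Chebyshev's inequality then gives $\prob\bigl(|(D_K+O_K)/K-\mu|>\delta\bigr)\le O(1/(\delta^2 K))\to 0$, and because $\mu>0$ the same tail bound forces $D_K+O_K>0$ with probability tending to one, so the outer absolute value can be dropped in the limit. This yields $\Delta_{\mathrm{SW}}/(\epsilon K)\to\mu$ in probability, matching the theorem (with $\epsilon$ absorbed into the statement's normalization). The only real obstacle is the index bookkeeping in step (ii); the remainder is standard identities for orthogonal projections plus a single Chebyshev estimate.
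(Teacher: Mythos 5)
Your proposal is correct and follows essentially the same route as the paper: both identify $\Delta_{\mathrm{SW}}/\epsilon$ with the quadratic form $\sign(\bw)^T\!\left(\sum_{k\in S}\bpsi_k\bpsi_k^T\right)\bw$, show its mean is $K\mu$ and its variance is $\bigo(K)$, and finish with Chebyshev. The only difference is bookkeeping — you split the form into diagonal and off-diagonal parts of the projection matrix $A$ and use $\mathrm{tr}(A)=\Vert A\Vert_F^2=K$, whereas the paper organizes the same second-moment computation as variances and covariances of the per-basis-vector products $U_iV_i=(\bpsi_i^T\bw)(\bpsi_i^T\sign(\bw))$; your handling of the outer absolute value (positivity w.h.p.\ since $\mu>0$) is also a valid alternative to the paper's use of $\left||Y_K|-\mu\right|\leq |Y_K-\mu|$.
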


\begin{remark}\label{rem:scaling_semiwhite}
After sparsification, the impact $\Delta_{\mathrm{SW}}$ of the adversarial perturbation scales linearly with the sparsity level $K$. Thus, the sparsifying front end provides an attenuation of $K/N$ on the effect of the semi-white box adversarial attack.
\end{remark}

\begin{proof}
Assuming without loss of generality that $\support(\bx) = \{1,2,\dots,K\}$, the output distortion can be written as $\,\Delta_{\mathrm{SW}} = |Z_K|, \; Z_K = \sum_{i=1}^K U_i V_i$, where
\begin{equation*}
U_i = \sum_{m=1}^N \psi_i \left[m\right] \, w_m,\;
V_i = \sum_{m=1}^N \psi_i \left[m\right] \, \sign(w_m),\;\, i=1,\dots,K.
\end{equation*}
We now state the following lemma:

\begin{lemma} \label{var_covar}
The mean and variance of $Z_K$ are bounded by linear functions of $K$:
\begin{equation*}
\E(Z_K) = K \mu, \quad \var(Z_K) \leq K \left(\sigma^2 + \mu^2\right). 
\end{equation*}
\end{lemma}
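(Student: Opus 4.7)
My plan is to recast $Z_K$ as a bilinear form in $\bw$ and $\bm{s}\triangleq\sign(\bw)$ using the projection matrix onto the support subspace. Let $P\triangleq\sum_{i=1}^K \bpsi_i\bpsi_i^T$; this is the orthogonal projector onto the $K$-dimensional subspace indexed by $\support(\bx)$, so $P$ is symmetric with $\mathrm{tr}(P)=\|P\|_F^2=K$. A direct rearrangement of the double sum yields
\[
Z_K \;=\; \sum_{i=1}^K U_i V_i \;=\; \sum_{m,n} P_{mn}\, w_m s_n \;=\; \bw^T P\, \bm{s}.
\]
For the mean, I would split $\E[Z_K]=\sum_{m,n} P_{mn}\E[w_m s_n]$ into diagonal and off-diagonal: since $w_m s_m=|w_m|$ has expectation $\mu$, while off-diagonal terms vanish by independence together with $\E[w_1]=\E[s_1]=0$, we get $\E[Z_K]=\mu\,\mathrm{tr}(P)=K\mu$.

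For the variance, the idea is to expand
\[
\E[Z_K^2] \;=\; \sum_{m,n,p,q} P_{mn}P_{pq}\,\E[w_m w_p s_n s_q]
\]
and sort the four-tuples $(m,n,p,q)$ by the equality pattern of their entries. Thanks to the iid assumption and $\E[w_1]=\E[s_1]=0$, every tuple containing an ``unpaired'' index contributes zero. A short case check leaves three surviving pairings on distinct indices: $(m{=}p,\,n{=}q)$ contributing $\sigma^2 P_{mn}^2$, and $(m{=}n,\,p{=}q)$ and $(m{=}q,\,p{=}n)$ contributing $\mu^2 P_{mm}P_{pp}$ and $\mu^2 P_{mp}^2$ respectively; the all-equal diagonal $m=n=p=q$ contributes $\sigma^2 P_{mm}^2$. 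Collecting these sums and using $\|P\|_F^2=\mathrm{tr}(P)=K$ gives
\[
\E[Z_K^2] \;=\; \sigma^2 K + \mu^2\!\left[K^2 + K - 2\textstyle\sum_m P_{mm}^2\right].
\]
Subtracting $(\E Z_K)^2 = K^2\mu^2$ and using $\sum_m P_{mm}^2\ge 0$ then yields $\var(Z_K)\le K(\sigma^2+\mu^2)$, as claimed.

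The main obstacle is the bookkeeping in this case analysis. Because the assumption only provides $\E[w_1]=0$ and $\E[\sign w_1]=0$, and not symmetry of the distribution, I cannot freely decouple $w_m$ from $s_m$ at the same index, and I must carry the coupled moment $\E[w_m s_m]=\mu$ through every repeated-index case while verifying that potentially awkward mixed moments such as $\E[w_m^2 s_m]$ appear only in tuples where a separate zero-mean factor kills the contribution. Once the accounting is organized via projector quantities $(\mathrm{tr}\,P,\,\|P\|_F^2,\,\sum_m P_{mm}^2)$, the rank-$K$ identities collapse everything into the stated linear-in-$K$ bound.
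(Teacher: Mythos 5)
Your proof is correct and lands on exactly the same closed form as the paper: both computations yield $\var(Z_K) = K(\sigma^2+\mu^2) - 2\mu^2\sum_m P_{mm}^2$ with $P_{mm}=\sum_{i=1}^K \psi_i^2[m]$, after which dropping the nonnegative correction gives the stated bound. The difference is purely one of bookkeeping: the paper keeps the decomposition $Z_K=\sum_{i=1}^K U_iV_i$ and computes $\E[U_iV_i]=\mu$, $\var(U_iV_i)$, and $\cov(U_iV_i,U_jV_j)$ term by term over the $K$ basis indices, whereas you collapse the $K$-fold sum into the projector $P=\sum_i \bpsi_i\bpsi_i^T$ up front and expand $\E[Z_K^2]$ over coordinate tuples $(m,n,p,q)$ classified by pairing pattern. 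Your version lets the rank-$K$ identities $\mathrm{tr}(P)=\left\Vert P\right\Vert_F^2=K$ do the counting, which makes it transparent that the answer depends on the basis only through $\sum_m P_{mm}^2$; the paper's version avoids the four-index case analysis at the cost of computing an explicit covariance for every pair $(i,j)$. Your attention to the coupled moment $\E[w_m s_m]=\mu$ (since the assumptions do not give symmetry of the law of $w_1$) and your check that tuples producing $\E[w_m^2 s_m]$ always carry a dangling zero-mean factor are exactly the right concerns, and the enumeration of the three two-pair patterns plus the all-equal diagonal is complete, so the argument goes through as planned.
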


\begin{proof}
We observe that for $i,j \in \{1,\dots,K\}$, $\E\left[U_i V_i\right] = \mu$,
\begin{align*}
&\var\left(U_i V_i\right) = \sigma^2 + \mu^2 -2 \mu^2 \sum_{m=1}^N \psi_i^4 \left[m\right],  \\
&\cov\left(U_i V_i,U_j V_j\right) = -2\mu^2 \sum_{m=1}^N \psi_i^2\left[m\right] \, \psi_j^2\left[m\right], \quad i \neq j.
\end{align*}
Hence we get $\E[Z_K] = K \mu$, and
\begin{align*}
\var(Z_K) &= \sum_{i=1}^K \var\left(U_i V_i\right) - \sum_{\substack{i,j=1 \\ i \neq j}}^K \cov\left(U_i V_i,U_j V_j\right) \\
&= K \left(\sigma^2 + \mu^2\right) -2\mu^2 \sum_{m=1}^N \, \sum_{i,j=1}^K \psi_i^2\left[m\right] \, \psi_j^2\left[m\right] \\
&\leq K \left(\sigma^2 + \mu^2\right).
\end{align*}
\end{proof}

\noindent We now apply Chebyshev's inequality to $Y_K = {Z_K / K}$, noting that $\E\left[Y_K\right]=\mu$ and $\var\left(Y_K\right) \leq {\left(\sigma^2 + \mu^2\right) / K}$:
\begin{align*}
\prob \left(|Y_K-\mu| \leq \delta \right) \geq 1 - \frac{1}{K} \left(\frac{\sigma^2+\mu^2} {\delta^2}\right) \quad \forall \; \delta \geq 0.
\end{align*}
The theorem follows by applying the sandwich theorem to the above inequality as ${K\to\infty}$, observing that $|\Delta_{\mathrm{SW}}/K - \mu| = ||Y_K| - \mu| \leq |Y_K - \mu|$.
\end{proof}

\subsubsection{White Box Attack}

\begin{lemma} \label{lemma:triangle}
An upper bound on the white box attack distortion is given by
\begin{equation*}
\Delta_{\mathrm{W}} \leq \sum_{k=1}^K \left| \bpsi_k^T \bw \right| \left\Vert \bpsi_k \right\Vert_1.
\end{equation*}
\end{lemma}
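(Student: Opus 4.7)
The plan is to start from the exact expression for $\Delta_{\mathrm{W}}$ derived earlier and then bound the $\ell_1$ norm of the projection by a straightforward triangle inequality in the basis $\Psi$.

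First I would recall that, in the high SNR regime, the optimal white box perturbation yields
\begin{equation*}
\Delta_{\mathrm{W}} = \epsilon \left\Vert \proj(\bw,\bx) \right\Vert_1,
\end{equation*}
and that by definition $\proj(\bw,\bx) = \sum_{k \in \support(\bx)} \bpsi_k \bpsi_k^T \bw$. Assuming without loss of generality $\support(\bx) = \{1,\dots,K\}$, this rewrites as $\proj(\bw,\bx) = \sum_{k=1}^{K} (\bpsi_k^T \bw)\, \bpsi_k$, a linear combination of the $K$ basis vectors with scalar coefficients $\bpsi_k^T \bw$.

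The second and final step is to apply the triangle inequality for the $\ell_1$ norm componentwise:
\begin{equation*}
\left\Vert \sum_{k=1}^{K} (\bpsi_k^T \bw)\, \bpsi_k \right\Vert_1 \;\leq\; \sum_{k=1}^{K} \left| \bpsi_k^T \bw \right| \, \left\Vert \bpsi_k \right\Vert_1,
\end{equation*}
and then absorb the factor $\epsilon$ (noting the lemma as stated drops the explicit $\epsilon$, effectively bounding $\Delta_{\mathrm{W}}/\epsilon$, or equivalently takes $\epsilon=1$ in the displayed inequality).

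There is no genuine obstacle here: the proof is essentially one line once the closed form $\Delta_{\mathrm{W}} = \epsilon \|\proj(\bw,\bx)\|_1$ is in hand. The only thing to be careful about is not using any orthogonality of $\bpsi_k$ in the $\ell_1$ sense (the basis is orthonormal in $\ell_2$, not in $\ell_1$), so the bound is genuinely an inequality rather than an equality; this is precisely what makes the triangle inequality step nontrivial to reverse and what the subsequent probabilistic analysis will have to work with.
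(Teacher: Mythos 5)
Your proof is correct and follows essentially the same route as the paper: write $\proj(\bw,\bx)=\sum_{k=1}^K(\bpsi_k^T\bw)\bpsi_k$, apply the triangle inequality to its $\ell_1$ norm componentwise, and regroup to obtain $\sum_{k=1}^K|\bpsi_k^T\bw|\,\Vert\bpsi_k\Vert_1$. Your remark about the dropped $\epsilon$ factor matches the paper, which also states the bound with $\epsilon$ absorbed.
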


\begin{proof}
\begin{align*}
\Delta_{\mathrm{W}} &= \sum_{i=1}^N \, \left| \sum_{k=1}^K \left( \bpsi_k^T \bw \right) \psi_k\left[i\right] \right|
\\ & \leq \sum_{i=1}^N \sum_{k=1}^K \left| \bpsi_k^T \bw \right| \left| \psi_k\left[i\right] \right|
= \sum_{k=1}^K  \left| \bpsi_k^T \bw \right|  \left\Vert \bpsi_k \right\Vert_1.
\end{align*}
\end{proof}

\begin{remarks}\label{rem:upper_bound_white}\
\begin{enumerate}[label=\arabic*.,leftmargin=15pt]
\item The upper bound is exact if the supports of the $K$ selected basis functions do not overlap. In our MNIST experiments, this is approximately satisfied for the range of $K$ that works most effectively (1-5\% of the coefficients in a wavelet basis).
\item Since the upper bound has $K$ terms, the distortion cannot grow slower than $K$. As stated in the following theorem, however, if the basis functions are ``localized'' with $\ell_1$ norms that do not scale too fast with $N$, then the output distortion scales as $\bigo\left(K \polylog(N)\right)$. 
\end{enumerate}
\end{remarks}

\begin{theorem} \label{theorem2}
With high probability,
\begin{equation*}
\Delta_\mathrm{W} \leq \bigo\left(K \polylog(N)\right),
\end{equation*}
under the assumptions $\left\Vert \bpsi_k \right\Vert _1 = \bigo(\log N)$, $\left\Vert \bpsi_k \right\Vert _\infty = \smallo(1)$ $\,\forall\, k\in\{1,2,\dots,K\}$, and $\left\Vert \bw \right\Vert _\infty = \bigo(1)$. Equivalently,
\begin{equation*}
\lim_{N \to \infty} \Pr\left(\Delta_\mathrm{W} \leq \bigo\left(K \polylog(N)\right)\right)=1.
\end{equation*}
\end{theorem}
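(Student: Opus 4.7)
The plan is to start from Lemma \ref{lemma:triangle}, which bounds $\Delta_{\mathrm{W}} \leq \sum_{k=1}^{K} |\bpsi_k^T \bw| \, \|\bpsi_k\|_1$. Pulling out the factor $\|\bpsi_k\|_1 = \bigo(\log N)$ from each summand reduces the problem to showing that $|\bpsi_k^T \bw|$ is at most $\bigo(\sqrt{\log N})$ with high probability, simultaneously for every $k$ in the support. Combining these bounds then yields $\Delta_{\mathrm{W}} \leq K \cdot \bigo(\sqrt{\log N}) \cdot \bigo(\log N) = \bigo(K\,\polylog(N))$, which is exactly the claim.

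For the key concentration step, I would observe that $\bpsi_k^T \bw = \sum_{m=1}^{N} \psi_k[m] \, w_m$ is a sum of $N$ independent, mean-zero random variables, since $\E[w_m] = 0$ by assumption. The hypothesis $\|\bw\|_\infty = \bigo(1)$ provides an almost-sure bound on each $|w_m|$, so $|\psi_k[m] \, w_m| \leq |\psi_k[m]| \, \|\bw\|_\infty$ coordinate-wise. Applying Hoeffding's inequality and using orthonormality, $\sum_m \psi_k^2[m] = \|\bpsi_k\|_2^2 = 1$, yields
\begin{equation*}
\Pr\!\left(|\bpsi_k^T \bw| > t\right) \;\leq\; 2\exp\!\left(-\frac{t^2}{2\, \|\bw\|_\infty^2}\right) \;=\; 2\exp(-c\, t^2)
\end{equation*}
for some constant $c>0$. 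Choosing $t = C\sqrt{\log N}$ with $C$ sufficiently large makes this tail at most $N^{-\alpha}$ for any prescribed $\alpha > 0$. The role of the auxiliary assumption $\|\bpsi_k\|_\infty = \smallo(1)$ is to rule out any single coordinate of $\bpsi_k$ from dominating the sum, which is what makes the sub-Gaussian regime the relevant one (rather than a Bernstein-type regime with a non-negligible linear tail term that would degrade the final polylog exponent).

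Finally, I would apply a union bound over $k \in \support(\bx)$ (or, to make the bound $\bx$-independent, over all $N$ basis indices) with $C$ chosen so that the total failure probability is $\bigo(N^{-\alpha+1})$, which vanishes as $N\to\infty$. On the complementary event we have $|\bpsi_k^T \bw| = \bigo(\sqrt{\log N})$ for every $k$ in the support, and substituting into Lemma \ref{lemma:triangle} together with $\|\bpsi_k\|_1 = \bigo(\log N)$ gives $\Delta_{\mathrm{W}} = \bigo(K\,(\log N)^{3/2}) = \bigo(K\,\polylog(N))$. The main obstacle I anticipate is not conceptual but bookkeeping: one must fix $C$ before applying the union bound so that the polylog exponent in the conclusion is consistent with the chosen confidence level, and one must confirm that $\|\bw\|_\infty = \bigo(1)$ is to be read as an almost-sure deterministic bound (otherwise a short truncation argument is needed before Hoeffding applies).
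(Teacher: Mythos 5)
Your proposal is correct, and it reaches the conclusion by a genuinely different route in the key concentration step. The paper also starts from Lemma \ref{lemma:triangle} and a union bound over $k$, but it controls the tails of $Z_k = \bpsi_k^T\bw$ by first proving a Lindeberg central limit theorem ($Z_k \to \mathcal{N}(0,\sigma^2)$, using $\left\Vert\bpsi_k\right\Vert_\infty = \smallo(1)$ and $\left\Vert\bw\right\Vert_\infty = \bigo(1)$ precisely to verify Lindeberg's condition) and then writing $\Pr(|Z_k| > t) = 2Q(t/\sigma)$ before letting $N\to\infty$. You instead apply Hoeffding's inequality directly, exploiting $\sum_m \psi_k^2[m] = 1$ and the almost-sure bound on $\|\bw\|_\infty$ to get a non-asymptotic sub-Gaussian tail $2\exp(-c t^2)$. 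Your route buys two things: (i) it is non-asymptotic, sidestepping the delicate point in the paper that convergence in distribution does not by itself license substituting the Gaussian tail $Q(\cdot)$ at finite $N$ with an equality (one would need a Berry--Esseen-type uniformity argument to make that step airtight); and (ii) it makes the assumption $\left\Vert\bpsi_k\right\Vert_\infty = \smallo(1)$ unnecessary --- your aside that it is needed to stay in the sub-Gaussian rather than Bernstein regime is not quite right, since Hoeffding uses only boundedness and $\|\bpsi_k\|_2 = 1$; in the paper that assumption is doing real work only as the Lindeberg condition. What the paper's CLT approach buys in exchange is a tail governed by the actual variance $\sigma^2$ rather than the worst-case range, and a statement that does not hinge on reading $\left\Vert\bw\right\Vert_\infty = \bigo(1)$ as a deterministic almost-sure bound (though the paper in fact uses it that way too, inside the Lindeberg verification). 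Both arguments land on the same quantitative bound, $\Delta_{\mathrm{W}} = \bigo\left(K(\log N)^{3/2}\right) = \bigo\left(K\polylog(N)\right)$, with vanishing failure probability after the union bound.
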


\begin{proof} \label{white_proof}
Letting $Z_k = \bpsi_k^T \bw$, we first state the following lemma:

\begin{lemma} \label{lemma:CLT}
$Z_k \rightarrow \mathcal{N}(0,\,\sigma^2)$ in distribution.
\end{lemma}

\begin{proof} \label{CLT_proof}
We show that we can apply Lindeberg's version of the central limit theorem, noting that $Z_k = \sum_{i=1}^N Y_i$, where $Y_i = \psi_k \left[i \right] w_i$ are independent random variables with $\E[Y_i]=0$ and $\var(Y_i)=\sigma_i^2$, with $\sum_{i=1}^N \sigma_i^2=\sigma^2$.

Now, given $\delta>0$, we investigate the following quantity in order to check Lindeberg's condition:
\begin{equation*}
L(\delta,N) = \frac{1}{\sigma^2} \sum_{i=1}^N \E \left[ Y_i^2  \mathbbm{1}_{\left\{|Y_i|> \delta \sigma \right\}} \right].
\end{equation*}
From the $\ell_{\infty}$ assumptions on $\bpsi_k$ and $\bw$, we observe that
\begin{align*}
 \E \left[ \psi_k^2\left[i \right] w_i^2 \mathbbm{1}_{\left\{|Y_i|> \delta \sigma \right\}}\right] &\leq \smallo^2(1)\bigo^2(1) \Pr{\left(|Y_i|> \delta \sigma \right)} \\
  &= \smallo^2(1)\bigo^2(1) \Pr{\left(|w_i|> \frac{\delta \sigma}{\smallo(1)} \right)}.
 \end{align*}
Also note that $\forall\, \delta>0, \; \exists\, M$ s.t. $\forall N>M$, $|w_i |< \delta \sigma/\smallo(1)$ $\forall \,i \in \{1,\dots,N\}$. Hence we get $\lim_{N \to \infty} L(\delta,N) =0$, which is Lindeberg's condition. 
\end{proof}

\noindent From Lemmas \ref{lemma:triangle} and \ref{lemma:CLT}, we get
\begin{align*}
&\Pr\left(\Delta_\mathrm{W} > \delta\right) \leq \Pr \left( \sum_{k=1}^K \left|Z_k\right| \left\Vert\bpsi_k\right\Vert_1 > \delta \right) \\
&\leq \Pr \left( \bigcup_{k=1}^K \left\{\left|Z_k\right| > \frac{\delta}{K \left\Vert\bpsi_k\right\Vert_1} \right\} \right) 
\leq \sum_{k=1}^K \Pr \left( \left|Z_k\right| > \frac{\delta}{K \left\Vert\bpsi_k\right\Vert_1} \right) \\
&= \sum_{k=1}^K 2 Q\left(\frac{\delta}{\sigma K \left\Vert\bpsi_k\right\Vert_1}\right)
= 2K Q\left(\frac{\delta}{\sigma} \bigo\left(\frac{1}{K \log N}\right)\right),
\end{align*}
where $Q(x) = \frac{1}{\sqrt{2 \pi}} \int_{x}^{\infty}  e^{-t^2/2}dt$, and we have used the $\ell_1$ assumption on $\bpsi_k$ in the last step. The theorem follows by setting $\delta = \bigo(K \polylog(N))$ and applying the sandwich theorem as $N\to \infty$.
\end{proof}

\section{Experimental results}

In this section we demonstrate the efficacy of sparsifying front ends on an inference task where our analysis directly applies: classification of digit pairs from the MNIST handwritten digit database \cite{lecun1998gradient} via linear SVM.\footnote{Code is available at {\url{https://github.com/soorya19/sparsity-based-defenses/}}.}

\subsection{Setup}
 
We consider the task of discriminating between digits $d_1$ and $d_2$, where $d_1\neq d_2 \in \{0, 1, \dots 9\}$. The dataset of interest is $\mathcal{X} = \{\bx: \mathcal{L}(\bx) \in \{d_1,d_2\}\}$, where $\bx$ denotes the images normalized to ${[-1,1]}$ and $\mathcal{L}(\bx)$ the true labels. We divide $\mathcal{X}$ into training and test sets $\mathcal{X}_{tr}, \mathcal{X}_{te}$ in a 3:1 ratio.

We train a linear SVM classifier $f(\cdot)$ on $\mathcal{X}_{tr}$ and obtain class predictions $\mathcal{C}(\cdot)$ as follows:
\begin{equation*}
f(\bx) = \bw^T \bx + b, \qquad
     \mathcal{C}(\bx)=
    \begin{cases}
      \; d_1, & f(\bx) < 0 \\
      \; d_2, & f(\bx) > 0.
    \end{cases}
\end{equation*}
In the scenario without front end, we consider the adversarial perturbation $\be = \epsilon \sign \left(\bw\right)$ on $\mathcal{X}_{te}$, where the ``direction'' of

 the attack is opposite that of the correct class:

\begin{equation*}
     \bbarx\,=
    \begin{cases}
      \; \bx + \be, & \mathcal{L}(\bx) = d_1 \\
      \; \bx - \be, & \mathcal{L}(\bx) = d_2
    \end{cases} \qquad \forall \; \bx \in \mathcal{X}_{te}.
\end{equation*}
In practice, the adversary usually only has access to $\mathcal{C}(\cdot)$ and not $\mathcal{L}(\cdot)$ for the test set. Hence this provides an upper bound for the classification error.

For the sparsifying front end, we use the Cohen–Daubechies–Feauveau 9/7 wavelet \cite{cohen1992biorthogonal} and impose sparsity in the wavelet domain. We retrain the SVM with the sparsified $\mathcal{X}_{tr}$ for various values of $\rho = K/N$, and evaluate the impact of semi-white box and white box attacks on $\mathcal{X}_{te}$.

\subsection{Results}

We begin with $3$ vs. $7$ classification. Without the front end, an attack with $\epsilon = 0.25$ completely overwhelms the classifier, reducing accuracy from 98.20\% to 0\%. 
Fig. \ref{fig:image_adv} shows a sample image before and after attack.

\begin{figure}[t]
  \centering
  \includegraphics[width=0.5\textwidth]{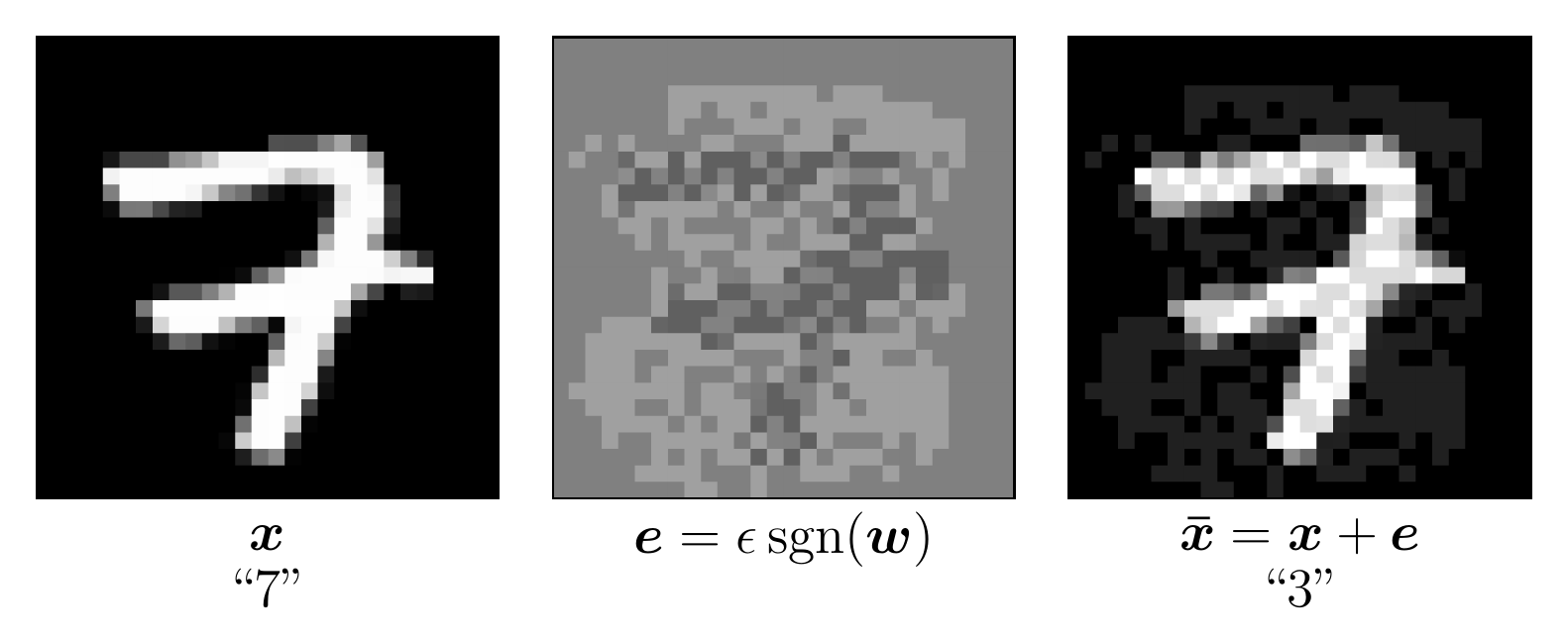}
  \vspace{-25pt}
  \caption{Sample image before and after attack ($\epsilon = 0.25$).
   The adversarial perturbation causes digit 7 to be misclassified as 3.}
  \label{fig:image_adv}
\end{figure}

Insertion of the sparsifying front end confers resiliency to attacks: at low values of $\rho$, accuracy is restored to near-baseline levels. 
The optimal value of $\rho$ must trade off signal distortion versus perturbation attenuation.  We find $\rho = 2$\% to be the best choice for the $3$ versus $7$ scenario,
and report on the accuracies obtained in Table \ref{table:binary}. Results for other digit pairs show a similar trend. Insertion of the front end greatly improves resilience to adversarial attacks. The optimal value of $\rho$ lies between $1-5$\%, with $\rho=2$\% working well for all scenarios. 

To give a concrete feel of the front end at work, Fig. \ref{fig:image_adv_sp} shows an example image, the attacked image, and the attacked image after
sparsification.

Fig. \ref{fig:acc_vs_sp} reports on accuracy as a function of $\rho$.  At the low values of $\rho$ that we are interested in, the white box attack is more damaging than
the semi-white box attack.  At higher $\rho$, a white box attack performs worse than the semi-white box attack: the high SNR condition in Proposition \ref{prop:SNR_condition} is no longer satisfied, hence the white box attack is attacking the ``wrong subspace.'' It is easy to devise iterative white box attacks that do better, but we do not discuss them here because the scenario of large $\rho$ is not of practical interest, since it does not provide enough attenuation of the adversarial perturbation.

\begin{figure}[b]
  \vspace{-10pt}
  \centering
  \includegraphics[width=0.5\textwidth]{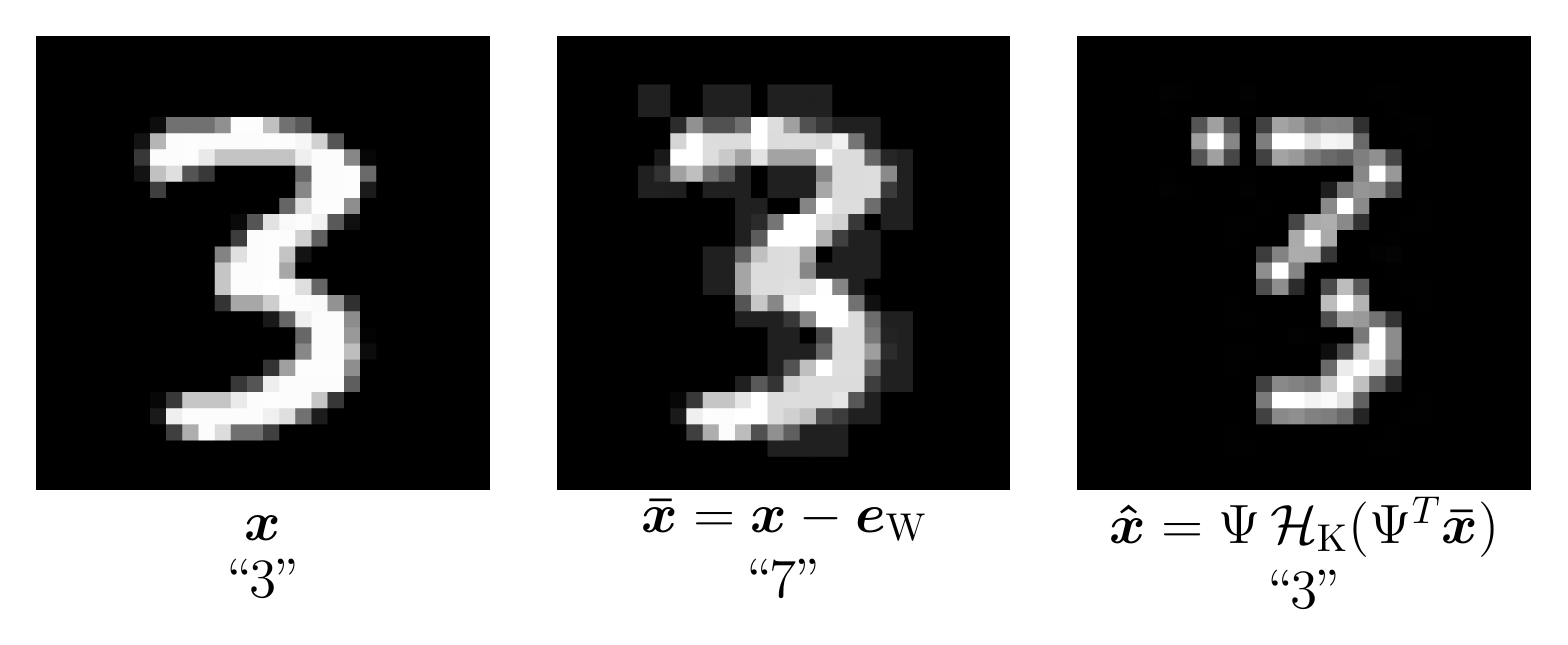}
  \vspace{-25pt}
  \caption{Sample image after white box attack ($\epsilon = 0.25$) and front end (2\%-sparse). The sparsified version of the attacked image resists misclassification.}
  \vspace{-5pt}
  \label{fig:image_adv_sp}
\end{figure}

\begin{table}[tb]
\centering
\caption{Binary classification accuracies ($3$ vs. $7$)}
\vspace{-5pt}
\label{table:binary}
\begin{tabular}{@{}lcc@{}}
\toprule
 & No front end & \begin{tabular}[c]{@{}c@{}}Sparsifying front end \\ ($\rho = \,$2\%)\end{tabular} \\ \midrule
No attack & 98.20\% & 98.59\% \\
Semi-white box attack & 0\% & 97.31\% \\
White box attack & 0\% & 94.62\% \\ \bottomrule                          
\end{tabular}
\end{table}

\begin{figure}[tb]
  \centering
  \includegraphics[width=0.5\textwidth]{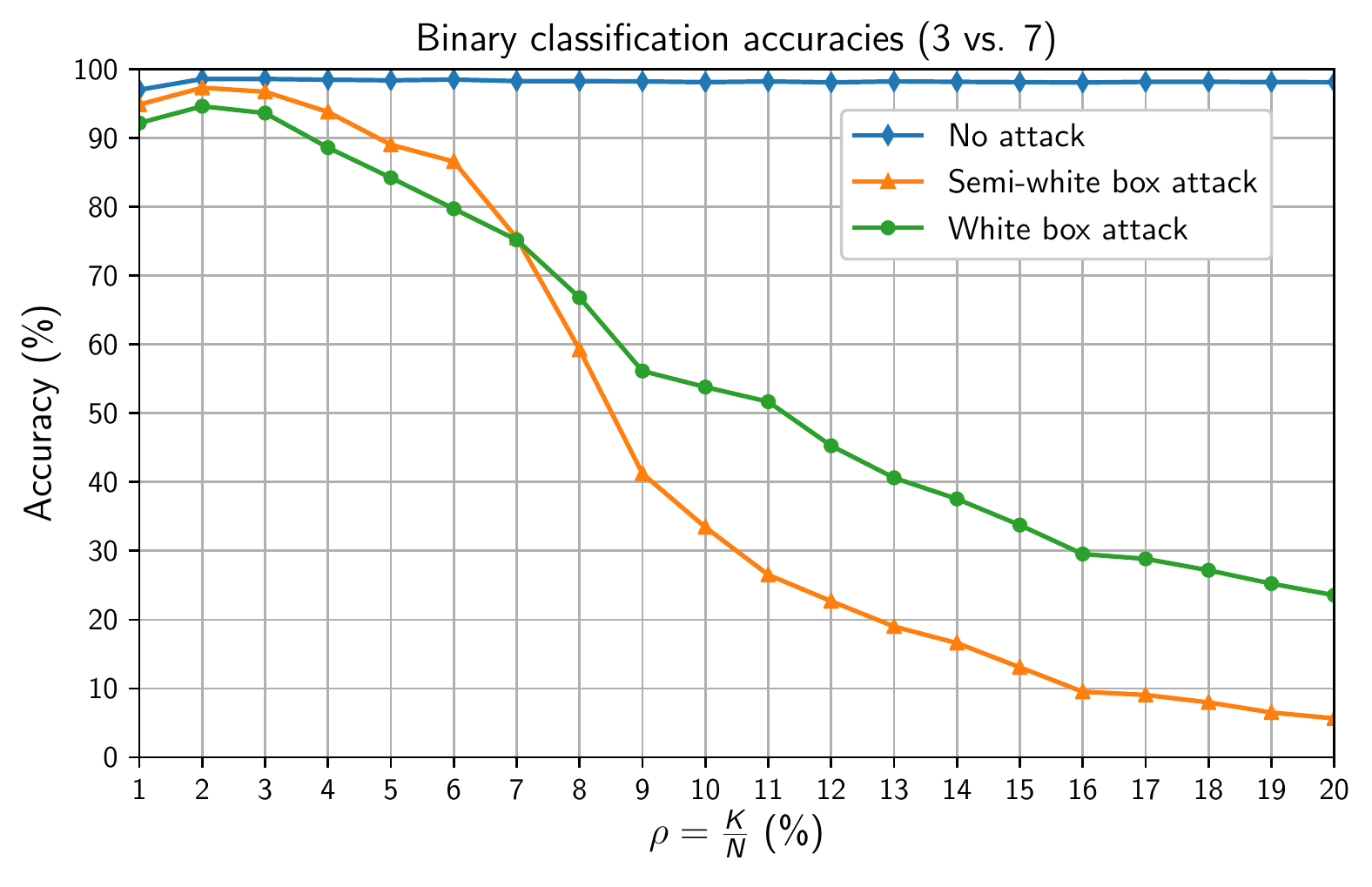}
  \vspace{-20pt}
  \caption{Binary classification accuracies as a function of front end sparsity. All attacks use $\epsilon = 0.25$. Effectiveness of the front end decreases with increase in $\rho$.}
  \vspace{-10pt}
  \label{fig:acc_vs_sp}
\end{figure}

\section*{Acknowledgment}

This work was supported in part by the National Science Foundation under grants CNS-1518812 and CCF-1755808, by Systems on Nanoscale Information fabriCs (SONIC), one of the six SRC STARnet Centers, sponsored by MARCO and DARPA, and by the UC Office of the President under grant No. LFR-18-548175.

\vspace{-5pt}
\bibliography{ms.bbl}

\end{document}